\def\delequal{\mathrel{\ensurestackMath{\stackon[1pt]{=}{\scriptstyle\Delta}}}}
\DeclareMathOperator*{\diag}{diag}
\newtheorem{theorem}{Theorem}[section]
\newtheorem{proposition}[theorem]{Proposition}
\theoremstyle{remark}
\newtheorem{remark}[theorem]{Remark}
\theoremstyle{definition}
\newtheorem{definition}[theorem]{Definition}
\newtheorem{problem}{Problem}
\newcommand{\calA}{{\cal A}}
\newcommand{\calH}{{\cal H}}
\newcommand{\calK}{{\cal K}}
\newcommand{\calL}{{\cal L}}
\newcommand{\calO}{{\cal O}}
\newcommand{\calU}{{\cal U}}
\newcommand{\calX}{{\cal X}}
\newcommand{\calZ}{{\cal Z}}
\newcommand{\bfd}{\mathbf{d}}
\newcommand{\bfk}{\mathbf{k}}
\newcommand{\bfp}{\mathbf{p}}
\newcommand{\bfq}{\mathbf{q}}
\newcommand{\bfu}{\mathbf{u}}
\newcommand{\bfx}{\mathbf{x}}
\newcommand{\bfy}{\mathbf{y}}
\newcommand{\bfz}{\mathbf{z}}
\newcommand{\bftheta}{\boldsymbol{\theta}}
\newcommand{\bfA}{\mathbf{A}}
\newcommand{\bfB}{\mathbf{B}}
\newcommand{\bfC}{\mathbf{C}}
\newcommand{\bfD}{\mathbf{D}}
\newcommand{\bfI}{\mathbf{I}}
\newcommand{\bfX}{\mathbf{X}}
\newcommand{\bbE}{\mathbb{E}}
\newcommand{\bbR}{\mathbb{R}}
\newcommand{\ubfu}{\underline{\bfu}}
\newcommand{\Var}{\textit{Var}}
\newcommand{\Cov}{\textit{Cov}}
\title{\LARGE \bf Safe Control Synthesis with Uncertain Dynamics and Constraints} 
\author{Kehan Long$^{1}$ \quad Vikas Dhiman$^{2}$  \quad Melvin Leok$^{1}$ \quad Jorge Cort{\'e}s$^{1}$ \quad Nikolay Atanasov$^{1}$%
\thanks{We gratefully acknowledge support from NSF RI IIS-2007141.}%
\thanks{$^{1}$The authors are with the Contextual Robotics Institute, University of California San Diego, La Jolla, CA 92093, USA. {\tt\small \{k3long,mleok,cortes,natanasov\}@ucsd.edu}.}%
\thanks{$^{2}$V. Dhiman is with Department of Electrical and Computer Engineering, University of Maine, Bangor, ME 04469, USA. {\tt\small \{vikas.dhiman\}@maine.edu}.}%
}
\begin{document}
\maketitle
\begin{abstract}
This paper considers safe control synthesis for dynamical systems with either probabilistic or worst-case uncertainty in both the dynamics model and the safety constraints. We formulate novel probabilistic and robust (worst-case) control Lyapunov function (CLF) and control barrier function (CBF) constraints that take into account the effect of uncertainty in either case. We show that either the probabilistic or the robust (worst-case) formulation leads to a second-order cone program (SOCP), which enables efficient safe and stable control synthesis.
We evaluate our approach in PyBullet simulations of an autonomous robot navigating in unknown environments and compare the performance with a baseline CLF-CBF quadratic programming approach.
\end{abstract}

\section{Introduction}
\label{sec: intro}

Autonomous robotic systems are increasingly employed in warehouse and home automation, transportation, and security applications. A crucial aspect of successfully deploying such systems is the satisfaction of safety and stability requirements, even in the presence of uncertainty in the system model or constraints. The notion of safety in the context of program correctness was first introduced in the $1970$'s \cite{lamport1977safety}. Around the same time, Artstein \cite{Artstein1983StabilizationWR} introduced control Lyapunov functions (CLFs) to enforce stability in the context of nonlinear system control. The seminal work of Sontag \cite{SONTAG1989117} established a universal formula for constructing feedback control laws that stabilize nonlinear systems. In the $2000$'s, barrier certificates were proposed to formally prove the safety of closed-loop nonlinear and hybrid systems \cite{barrier-certificate, barrier-hybrid}. Control barrier functions (CBFs) were developed to support task-independent safe control synthesis, serving as a barrier certificate for a closed-loop nonlinear system \cite{wieland2007}.

A key observation is that, for control-affine systems, the CLF and CBF conditions are linear in the control input, allowing a formulation of safe and stable control synthesis as a quadratic program (QP) \cite{ames2016control, nguyen2016acc, cbf}. 
CLF-CBF-QP techniques have been successfully employed in a variety systems, including aerial robots \cite{Wang2017SafeCM}, walking robots \cite{nguyen2016cdc}, and automotive systems \cite{xu2017realizing}. Most existing work, however, assumes complete knowledge of the system dynamics and control barrier functions. In reality, the dynamics model and safety constraints are obtained using noisy sensor data and simplifying assumptions, leading to uncertainty and errors that should be captured when ensuring safety and stability.

Capturing system-model and barrier-function estimation errors impacts the formulation of CLF and CBF constraints, and no longer give rise to QPs. Our main contribution is to show that such uncertainty-aware stability and safety constraints can still be formulated as convex constraints under two different models of uncertainty: probabilistic and worst-case.
To capture probabilistic uncertainty, we specifically consider \emph{Gaussian Process} (GP) regression as an example approach for modeling a probability distribution over a function space. 
When the estimated barrier function and system dynamics are described by a GP, we aim to ensure probabilistic safety and stability up to a user-specified risk tolerance. We compute the distribution of the CLF and CBF constraints, and use Cantelli's inequality \cite{CantelliSuiCD} to bound the computed means with a margin dependent on the variances and the desired risk-tolerance. The control input appears linearly in the mean and quadratically in the variance of the CLF and CBF constraints. This allows us to restate the probabilistic constraints as second-order cone constraints, leading to a second-order cone program (SOCP), which is convex and can be solved efficiently online. 

When \emph{worst-case error bounds} on the system dynamics, barrier function and its gradient are given, we formulate a robust safe control
synthesis problem. Under worst-case disturbances, we show that the input appears both linearly and within a norm term in the CLF and CBF constraints. Like the probabilistic formulation, the original QP problem can be reformulated as a convex SOCP for safe control synthesis. 

We demonstrate our safe control synthesis techniques in mobile robot navigation simulations. We consider a robot tasked to follow a desired path in an unknown environment, relying on online noisy obstacle sensing and offline dynamic model estimation to ensure safety and stability. We show that both the probabilistic and the robust CLF-CBF-SOCP formulation allows the robot to safely track the deisred path.


In summary, we make the following \textbf{contributions}. First, we formulate novel probabilistic safety and stability constraints by considering stochastic uncertainty in the barrier functions and system dynamics. Second, we formulate novel robust safety and stability constraints by considering worst-case error bounds in the barrier functions and system dynamics. Finally, we show that either the probabilistic or the worst-case formulations lead to a (convex) SOCP, enabling efficient synthesis of safe and stable control.

\section{Related Work}
\label{sec: related_work}

This section reviews recent works on safe control synthesis that address uncertainty due to unmodeled dynamics, input disturbances, and barrier function estimation.


Jankovic \cite{jankovic_robust_2018} considers worst-case disturbance bounds on the system dynamics and proposes robust CBF formulations. Eman \textit{et al.} \cite{yousef2019cdc} utilize convex hulls to model disturbances in a CBF-based safety framework. 
Clark \cite{andrew2019acc} considers stochastic control systems with incomplete information and derives sufficient conditions for ensuring safety on average. 
Nguyen and Sreenath \cite{Nguyen2021} formulate a robust CLF-CBF QP by introducing robust constraints to guarantee stability and safety under model uncertainty. Hewing \textit{et al.} \cite{lukas_2020_TCST} present a model predictive control (MPC) approach that integrates a nominal system with a residual part modeled as a GP. Compared to our formulation, this approach enables optimizing the control performance over a longer future horizon but requires time discretization and convexification of the safety constraints. In contrast, our formulations operate in continuous time and handle general safe set descriptions. Ahmadi \textit{et al.} \cite{Ahmadi2020RiskSensitivePP} introduce a conditional value-at-risk (CVaR) barrier function to ensure safety for systems with stochastic uncertainty. The approach guarantees safety with high probability even for worst-case scenarios but the computation cost is high and the formulation is restricted to linear systems. Our approach enables efficient control synthesis for general control-affine systems. Another line of research formulates safe control synthesis as trajectory optimization. Alcan and Kyrki \cite{Alcan_DDP_RAL} employ differential dynamics programming (DDP) to enforce safety under additive uncertainty. In \cite{Hassan_DBaS_RAL}, the DDP idea is combined with CBF to introduce a barrier state formulation for safety of discrete-time systems.

\emph{Input-to-state safety} (ISSf) was introduced in \cite{romdlony2016cdc} to handle input disturbances and was used in \cite{Kolathaya2019issf} to enlarge a safe set by modifying a CBF. Alan \textit{et al.} \cite{alan2021safe} introduce a tunable ISSf-CBF for safe control synthesis while reducing conservatism. Cosner \textit{et al.} in \cite{cosner2021measurement} introduce measurement-robust CBFs to account for uncertainty in state estimation and conduct experiments on a Segway. 

Srinivasan \textit{et al.} \cite{srinivasan2020synthesis} estimate barrier functions online using a Support Vector Machine and solve a CLF-CBF QP to generate safe control inputs. Zhang \textit{et al.} \cite{zhang2021adversarially} construct robust output CBFs from safe expert demonstrations while considering worst-case error bounds in the measurement map and system dynamics. 



This paper unifies and extends our prior work \cite{dhiman2020control, Long_learningcbf_ral21} by considering safe control synthesis with uncertainty in the system dynamics and the barrier function simultaneously and studying two separate cases of probabilistic and worst-case uncertainty. In contrast, \cite{dhiman2020control} only considered probabilistic uncertainty in the dynamics using Gaussian process regression, while \cite{Long_learningcbf_ral21} only considered worst-case error bounds in the barrier function. We show that in either case the safe control synthesis problem is a convex SOCP, which enables efficient safe and stable control synthesis online. 


\section{Problem Formulation}\label{sec:problem}


Consider a robot with dynamics model:
\begin{equation}\label{eq: dynamic}
\begin{aligned}
    \dot{\bfx} = f(\bfx) + g(\bfx) \bfu &=     [f(\bfx) \; g(\bfx)] \cdot\begin{bmatrix}
    1 \\
    \bfu
    \end{bmatrix} \delequal F(\bfx)\ubfu,
\end{aligned}
\end{equation}
where $\bfx \in \calX \subseteq \mathbb{R}^{n}$ is the robot state and $\ubfu \in \underline{\mathcal{U}} = \{1\} \times \mathbb{R}^{m}$ is the control input.\footnotemark{} We assume $f : \mathbb{R}^{n} \mapsto \mathbb{R}^{n}$ and $g : \mathbb{R}^{n} \mapsto \mathbb{R}^{n \times m}$ are continuously differentiable.



\begin{definition}
A continuously differentiable function $V: \mathbb{R}^n \mapsto {\mathbb{R}_{\geq 0}}$ is a \emph{control Lyapunov function (CLF)} for the system \eqref{eq: dynamic} if there exists a class $\mathcal{K}$ function $\alpha_V$ such that:
\begin{equation}
    \inf_{\ubfu \in \underline{\mathcal{U}}} \textit{CLC}(\bfx,\ubfu) \leq 0, \quad \forall \bfx \in \calX,
\end{equation}
where the \emph{control Lyapunov condition (CLC)} is:
\begin{equation}\label{eq:clc_define}
\begin{aligned}
    \textit{CLC}(\bfx,\ubfu) &\delequal \mathcal{L}_f V(\bfx) + \mathcal{L}_g V(\bfx)\bfu + \alpha_V( V(\bfx)) \\
    & =  [\nabla_{\bfx} V(\bfx)]^\top F(\bfx)\ubfu + \alpha_{V}(V(\bfx)).
\end{aligned}
\end{equation}
\end{definition}

A CLF $V$ may be used to encode a variety of control objectives, including path following \cite{Long_learningcbf_ral21}, adaptive cruise control \cite{xu2017realizing}, and bipedal robot walking \cite{nguyen2016cdc}.

\footnotetext{\textbf{Notation}: We denote by $\bfI_n \in \bbR^{n \times n}$ the identity matrix and $\partial \calA$ the boundary of a set $\calA \subset \mathbb{R}^n$. For a vector $\bfx$ and a matrix $\bfX$, we use $\|\bfx\|$ and $\| \bfX\|$ to denote the Euclidean norm and the spectral norm. We use $\text{vec}(\bfX) \in \mathbb{R}^{nm}$ to denote the vectorization of $\bfX \in \mathbb{R}^{n \times m}$, obtained by stacking its columns. 
We denote by $\nabla$ the gradient and $\calL_fV  = \nabla V \cdot f$ the Lie derivative of a differentiable function $V$ along a vector field $f$. We use $\otimes$ to denote the Kronecker product and $\mathcal{GP}(\mu(\bfx), K(\bfx,\bfx'))$ to denote a Gaussian Process distribution with mean function $\mu(\bfx)$ and covariance function $K(\bfx,\bfx')$. A continuous function $\alpha: [0,a)\rightarrow [0,\infty )$ is of class $\calK$ if it is strictly increasing and $\alpha(0) = 0$, and it is of class $\calK_{\infty}$ and $\lim_{r \rightarrow \infty} \alpha(r) = \infty$.
}

To define safety requirements for the control objective, consider a continuously differentiable function $h: \mathbb{R}^n \mapsto {\mathbb{R}}$, which implicitly defines a (closed) safe set of system states $\mathcal{S}  \delequal \{\bfx \in \mathcal{X} \; | \; h(\bfx) \geq 0\}$. The following definition is a useful tool to ensure that $\mathcal{S}$ is forward invariant, i.e., the robot state remains in $\mathcal{S}$ throughout its evolution.

%
%


\begin{definition}
A continuously differentiable function $h: \mathbb{R}^n \mapsto {\mathbb{R}}$ is a \emph{control barrier function (CBF)} on $\mathcal{X} \subseteq \mathbb{R}^n$ for \eqref{eq: dynamic} if there exists an extended class $\mathcal{K}_{\infty}$ function $\alpha_h$ with:
\begin{equation}\label{eq:cbf}
    \sup_{\ubfu\in \underline{\mathcal{U}}} \textit{CBC}(\bfx,\ubfu) \geq 0, \quad \forall \bfx \in \calX,
\end{equation}
where the \emph{control barrier condition (CBC)} is:
\begin{equation}
\label{eq:cbc_define}
\begin{aligned}
    \textit{CBC}(\bfx,\ubfu) & \delequal \mathcal{L}_f h(\bfx) + \mathcal{L}_g h(\bfx)\bfu + \alpha_h (h(\bfx)) \\
    & = [\nabla_{\bfx} h(\bfx)]^\top F(\bfx)\ubfu + \alpha_{h}(h(\bfx)).
\end{aligned}
\end{equation}
\end{definition}

According to~\cite{cbf,ames2016control}, any Lipschitz-continuous controller $\underline{\bfk}: \mathcal{X} \mapsto \underline{\mathcal{U}}$ that satisfies $\textit{CBC}(\bfx,\underline{\bfk}(\bfx)) \geq 0$ for all $\bfx \in \mathcal{X}$ renders the set $\mathcal{S}$ forward invariant for the system~\eqref{eq: dynamic}. 

\subsection{Safety and Stability with Known System Dynamics and Barrier Function}

When the system dynamics $F(\bfx)$ and barrier function $h(\bfx)$ are known, a safe controller can be synthesized by combining CLF and CBF constraints in a quadratic program:
\begin{equation}\label{eq:QP_origin}
\begin{aligned}
& \min_{\ubfu \in \underline{\calU},\delta \in \bbR}\,\, \|L(\bfx)^\top(\ubfu - \underline{\tilde{\bfk}}(\bfx))\|^2 + \lambda \delta^2,\\
\mathrm{s.t.} \, \,  &\textit{CLC}(\bfx,\ubfu) \leq \delta,\; \textit{CBC}(\bfx,\ubfu) \geq 0.
\end{aligned}
\end{equation}
%
The term $\underline{\tilde{\bfk}}(\bfx)$ is a baseline controller and may be used to specify additional control requirements, such as desirable velocity or orientation. This term may be set to $\underline{\tilde{\bfk}}(\bfx) \equiv \boldsymbol{e}_1$ if minimum control effort is the main objective. The term $L(\bfx)$ is a weighting matrix penalizing deviation from the baseline controller. The term $\delta \geq 0$
is a slack variable that relaxes the CLF constraints to ensure the feasibility of the QP, controlled by the scaling factor $\lambda > 0$. The QP formulation in \eqref{eq:QP_origin} modifies the baseline controller $\underline{\tilde{\bfk}}(\bfx)$ online to ensure safety and stability via the CBF and CLF constraints.

We focus on enforcing safety and stability for the control-affine system in \eqref{eq: dynamic} when the system dynamics $F(\bfx)$ and the barrier function $h(\bfx)$ are \emph{unknown} and need to be estimated from data.
%
%
We present an approach for estimating the system dynamics and barrier functions from data in Sec.\ref{sec:dynamics_estimate} and Sec.\ref{sec:sdf_estimates}, respectively. Our main goal is to develop techniques for safe and stable control synthesis with the estimated $F(\bfx)$ and $h(\bfx)$. We consider two scenarios, depending on whether probabilistic or worst-case error descriptions of the dynamics and barrier functions are available. 

\subsection{Safety and Stability with Gaussian Process Distributed System Dynamics and Barrier Function}

When the system dynamics and barrier functions can be described as GPs, we consider the following probabilistic control synthesis problem.

%
%



\begin{problem}[\textbf{Safety and stability under Gaussian uncertainty}]\label{prob: prob_clf_cbf}
Given an estimated distribution on the unknown system dynamics $\text{vec}(F(\bfx)) \sim \mathcal{GP}(\text{vec}(\tilde{F}(\bfx)), K_{F}(\bfx,\bfx'))$ and an estimated distribution on the barrier function $h(\bfx) \sim \mathcal{GP}(\tilde{h}(\bfx),K_h(\bfx,\bfx'))$, design a feedback controller $\underline{\bfk}$ such that, for each $\bfx \in \calX$:
\begin{equation*}
    \mathbb{P}(\textit{CLC}(\bfx, \underline{\bfk}(\bfx)) \leq \delta) \geq p, \quad \mathbb{P}(\textit{CBC}(\bfx, \underline{\bfk}(\bfx)) \geq 0) \geq p,
\end{equation*}
where $p \in (0,1)$ is a user-specified risk tolerance. 
\end{problem}

\subsection{Safety and Stability with Worst-Case Uncertainty in System Dynamics and Barrier Function}

Many robotic systems require instead the guarantee that safety and stability hold under all possible error realizations,
which motivates us to also consider the following problem.

\begin{problem}[\textbf{Safety and stability under worst-case uncertainty}]\label{prob: robust_clf_cbf}
Given estimated system dynamics $\tilde{F}(\bfx)$ with known error bound $e_F(\bfx)$,
\begin{equation}
\label{error_dynamics}
    \|F(\bfx) - \tilde{F}(\bfx)\| \leq e_F(\bfx) , \; \forall \bfx \in \calX ,
\end{equation}
and estimated barrier function $\tilde{h}(\bfx)$ and gradient $\nabla \tilde{h}(\bfx)$ with known error bounds $e_h(\bfx)$ and $e_{\nabla h}(\bfx)$, i.e., for all $\bfx \in \calX$,
\begin{equation}
\label{error_barrier}
     |h(\bfx) -\tilde{h}(\bfx)| \leq e_h(\bfx) , \;
    \|\nabla h(\bfx) - \nabla \tilde{h}(\bfx)\|  \leq e_{\nabla h}(\bfx),
\end{equation}
%
design a feedback controller $\underline{\bfk}$ such that, for each $\bfx \in \calX$:
\begin{equation*}
    \textit{CLC}(\bfx, \underline{\bfk}(\bfx)) \leq \delta, \quad \textit{CBC}(\bfx, \underline{\bfk}(\bfx)) \geq 0.
\end{equation*}
\end{problem}
%








\section{Probabilistic Safe Control}\label{sec:prob_control}

This section presents our solution to Problem~\ref{prob: prob_clf_cbf}. 
Inspired by the design~\eqref{eq:QP_origin} when the dynamics and the barrier function are known, we formulate the control synthesis problem via the following optimization problem:
\begin{align}\label{eq:CLF_CBF_QP_Gaussian}
& \min_{\ubfu \in \underline{\calU},\delta \in \bbR}\,\, \|L(\bfx)^\top(\ubfu - \underline{\tilde{\bfk}}(\bfx))\|^2 + \lambda \delta^2,\\
&\mathrm{s.t.} \, \,  
\mathbb{P}(\textit{CLC}(\bfx, \ubfu) \leq \delta) \geq p, \quad \mathbb{P}(\textit{CBC}(\bfx, \ubfu) \geq 0) \geq p. \notag
\end{align}
The uncertainty in $F$ and $h$ affects the linearity in $\ubfu$ of the CLC and CBC conditions in the constraints of~\eqref{eq:CLF_CBF_QP_Gaussian}, making this optimization problem no longer a QP. Here, we justify that nevertheless the optimization can be solved efficiently. To show this, we start by analyzing the distributions of $\textit{CBC}(\bfx,\ubfu)$ and $\textit{CLC}(\bfx,\ubfu)$ in detail. 

\begin{proposition}[\textbf{Mean and Variance for CBC}]\label{prop:cbc_mean_variance}
Assume $h$ is a CBF with a linear function $\alpha_h$, i.e., $\alpha_h(z) = a \cdot z$ for $a \in \mathbb{R}_{\ge 0}$. Given independent distributions $h(\bfx) \sim \mathcal{GP}(\tilde{h}(\bfx),K_h(\bfx,\bfx'))$ and $\text{vec}(F(\bfx)) \sim \mathcal{GP}(\text{vec}(\tilde{F}(\bfx)), K_{F}(\bfx,\bfx'))$, the mean and variance of $\textit{CBC}(\bfx,\ubfu)$ satisfy
\begin{subequations}\label{eq: cbc_mean_variance}
\begin{align}
  \label{eq:cbc_mean}
    &\mathbb{E}[\textit{CBC}(\bfx,\ubfu)] = \mathbb{E}[\bfp(\bfx)]^\top\ubfu
    \\
    &\textit{Var}[\textit{CBC}(\bfx,\ubfu)] = \ubfu^\top \textit{Var}[\bfp(\bfx)]\ubfu,     \label{eq:cbc_var}
\end{align}
\end{subequations}
where $\bfp(\bfx) := F^\top(\bfx)[\nabla_{\bfx} h(\bfx)]+  \left[a h(\bfx) \quad \boldsymbol{0}_m^\top \right]^\top \in \bbR^{m+1}$
and $\mathbb{E}[\bfp(\bfx)]$, $ \textit{Var}[\bfp(\bfx)]$ are computed in \eqref{eq:covariance_Px}.
\end{proposition}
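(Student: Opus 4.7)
The plan is to observe that the CBC, despite involving two independent random objects $h$ and $F$, is \emph{linear} in the augmented control $\ubfu$ with a random coefficient vector, after which the mean and variance formulas reduce to the standard identities for a linear function of a random vector with deterministic coefficients.

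First, I would rewrite the CBC in a single compact bilinear form. Starting from~\eqref{eq:cbc_define} with $\alpha_h(z)=az$, the term $\alpha_h(h(\bfx))=a\,h(\bfx)$ does not depend on $\ubfu$, but it can be absorbed into the linear-in-$\ubfu$ expression by padding with zeros, using the fact that the first component of $\ubfu$ is identically $1$. Concretely,
\begin{equation*}
    a\,h(\bfx) = \begin{bmatrix} a\,h(\bfx) & \mathbf{0}_m^\top \end{bmatrix}\ubfu,
\end{equation*}
so that $\textit{CBC}(\bfx,\ubfu) = [\nabla h(\bfx)]^\top F(\bfx)\ubfu + a\,h(\bfx) = \bfp(\bfx)^\top \ubfu$ with $\bfp(\bfx)$ as defined in the proposition. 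The key point is that at the decision instant the input $\ubfu$ is a deterministic parameter (it is what we are solving for), while all randomness is packed into $\bfp(\bfx)$.

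Given this representation, \eqref{eq:cbc_mean} follows from linearity of expectation, $\mathbb{E}[\bfp(\bfx)^\top \ubfu]=\mathbb{E}[\bfp(\bfx)]^\top \ubfu$, and \eqref{eq:cbc_var} follows from the standard quadratic-form identity for the variance of a linear combination with deterministic coefficients, $\textit{Var}[\bfp(\bfx)^\top\ubfu]=\ubfu^\top \textit{Var}[\bfp(\bfx)]\ubfu$. Neither identity requires Gaussianity per se; it only requires that $\ubfu$ be non-random conditional on $\bfx$ and that $\bfp(\bfx)$ have well-defined first and second moments.

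The main technical subtlety, which I expect to be the principal obstacle and the reason the proposition points to a separate expression~\eqref{eq:covariance_Px}, is the explicit evaluation of $\mathbb{E}[\bfp(\bfx)]$ and $\textit{Var}[\bfp(\bfx)]$. This requires two auxiliary facts: (i) since $h\sim\mathcal{GP}$, its gradient $\nabla h(\bfx)$ is jointly Gaussian with $h(\bfx)$, with mean $\nabla\tilde{h}(\bfx)$ and covariance obtained by differentiating $K_h$; and (ii) the product $F^\top(\bfx)\nabla h(\bfx)$ couples two independent random objects, so its mean factors as $\tilde{F}(\bfx)^\top \nabla\tilde{h}(\bfx)$, while its covariance must be assembled using the independence of $F$ and $(h,\nabla h)$ and the Kronecker structure of $\text{Var}[\text{vec}(F)]=K_F(\bfx,\bfx)$. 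Writing these out carefully, together with the deterministic block $[a\,h(\bfx),\mathbf{0}_m^\top]^\top$ whose covariance with the first block involves $\text{Cov}(h(\bfx),\nabla h(\bfx))$, yields the explicit formula referenced in~\eqref{eq:covariance_Px}. The structural claim of the proposition itself, however, follows immediately once the bilinear rewrite $\textit{CBC}=\bfp^\top\ubfu$ is in place.
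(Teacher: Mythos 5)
Your proposal is correct and follows essentially the same route as the paper: the same rewrite $\textit{CBC}(\bfx,\ubfu)=\bfp(\bfx)^\top\ubfu$ via padding $a\,h(\bfx)$ with zeros against the unit first component of $\ubfu$, followed by the standard mean/variance identities for a linear form with deterministic $\ubfu$, with the real work deferred to evaluating $\mathbb{E}[\bfp(\bfx)]$ and $\textit{Var}[\bfp(\bfx)]$ using the gradient-of-a-GP fact, the independence of $F$ and $(h,\nabla h)$, and the Kronecker structure of $K_F$ --- exactly the ingredients the paper assembles in~\eqref{eq:covariance_Px}. No gaps.
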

\begin{proof}
The control barrier condition can be written as:
\begin{align*}
    \textit{CBC}(\bfx,\ubfu) &= [\nabla_{\bfx} h(\bfx)]^\top f(\bfx) + [\nabla_{\bfx} h(\bfx)]^\top g(\bfx)\bfu + a h(\bfx)  
    \notag
    \\
     &= \big[ [\nabla_{\bfx} h(\bfx)]^\top F(\bfx) \!+\! \left[a h(\bfx) \; \boldsymbol{0}_m^\top \right] \big ] \ubfu = \bfp(\bfx)^\top \ubfu .
\end{align*}
%
Note that $\nabla_{\bfx} h(\bfx)$ is a GP because the gradient of a GP with differentiable mean function and twice-differentiable covariance function is also a GP, 
cf.~\cite[Lemma 6]{dhiman2020control},
\begin{equation*}
\begin{aligned}
    \nabla_{\bfx} h(\bfx) \sim
    \mathcal{GP}(\nabla_{\bfx} \tilde{h}(\bfx),\mathcal{H}_{\bfx,\bfx'}K_h(\bfx,\bfx')),
\end{aligned}
\end{equation*}
where $\mathcal{H}_{\bfx,\bfx'}K_h(\bfx,\bfx') = \left[ \frac{\partial^2 K_h(\bfx,\bfx')}{\partial\bfx_i,\partial\bfx_j'} \right]_{i=1,j=1}^{n,n}$ is finite for all $(\bfx,\bfx') \in \mathbb{R}^{2n}$. Since $\text{vec}(\bfA\bfB\bfC) = (\bfC^\top \otimes \bfA)\text{vec}(\bfB)$ for appropriately sized matrices $\bfA$, $\bfB$, $\bfC$, we can write 
\begin{equation}
\label{eq:kronecker_trick}
\begin{aligned}
\Var(F(\bfx) \ubfu)
&= \Var((\ubfu^\top \otimes \bfI_{n})\text{vec}(F(\bfx)))\\
&= (\ubfu^\top \otimes \bfI_{n})K_F(\bfx, \bfx) (\ubfu \otimes \bfI_{n}) .
\end{aligned}
\end{equation}
For brevity, we let $K_F := K_F(\bfx,\bfx')$ and $K_h := K_h(\bfx, \bfx')$ and $\bfp_1 = F^\top(\bfx)[\nabla_\bfx h(\bfx)]$. The term $[\nabla_{\bfx} h(\bfx)]^\top F(\bfx)\ubfu$ is an inner product of two independent GPs, $\nabla_{\bfx} h(\bfx)$ and $F(\bfx)\ubfu$.
Thus, using \cite[Lemma 5]{dhiman2020control}, \eqref{eq:kronecker_trick}, and that $\Cov(\nabla_\bfx h(\bfx), F(\bfx)\ubfu) = 0$, $\bfp_1^\top\ubfu$ corresponds to a distribution with mean and variance:
\begin{equation}
\label{eq:part_1_mean_variance}
\begin{aligned}
&\bbE[\bfp_1^\top \ubfu] = [\nabla_{\bfx} \tilde{h}(\bfx)]^\top\tilde{F}(\bfx)\ubfu , \\
&\textit{Var}[\bfp_1^\top\ubfu] = 
 [\nabla_{\bfx} \tilde{h}(\bfx)]^\top(\ubfu^\top \otimes \bfI_n)K_{F} 
  \\&\qquad
(\ubfu \otimes \bfI_n) \nabla_{\bfx} \tilde{h}(\bfx) + 
 \ubfu^\top\tilde{F}^\top(\bfx)\mathcal{H}_{\bfx,\bfx'} K_h \tilde{F}(\bfx) \ubfu.
\end{aligned}
\end{equation}
To factorize $\ubfu$ from the variance expression, we apply
the property $(\bfA \otimes \bfB)(\bfC \otimes \bfD) = \bfA\bfC \otimes \bfB\bfD$ two times,
\begin{equation}
\label{eq:swapping-gradh-u}
\begin{aligned}
    &(\ubfu \otimes \bfI_n)[\nabla_\bfx \tilde{h}(\bfx)] =
     (\ubfu \otimes \bfI_n)(1 \otimes [\nabla_\bfx \tilde{h}(\bfx)])
     \\&\quad= \ubfu \otimes \nabla_\bfx \tilde{h}(\bfx) 
    = (\bfI_{m+1} \otimes \nabla_\bfx \tilde{h}(\bfx)) \ubfu.
\end{aligned}
\end{equation}
By substituting \eqref{eq:swapping-gradh-u} in \eqref{eq:part_1_mean_variance}, we can factorize out $\ubfu$ to get,
\begin{align}
\label{eq:p1-var}
\Var[\bfp_1]
 &=  
 (\bfI_{m+1} \otimes [\nabla_\bfx \tilde{h}(\bfx)]^\top)K_{F}
 (\bfI_{m+1} \otimes \nabla_\bfx \tilde{h}(\bfx)) 
 \notag\\ &\quad 
 +  \tilde{F}^\top(\bfx)\mathcal{H}_{\bfx,\bfx'} K_h \tilde{F}(\bfx)  . 
\end{align}
Next, we write $\Cov(h(\bfx), \bfp_1^\top\ubfu)$ using \cite[Lemma 5]{dhiman2020control} and $\Cov(h(\bfx), F(\bfx)\ubfu) = 0$,
\begin{align}
\label{eq:p1-h-cov}
\Cov(&h(\bfx), \bfp_1^\top\ubfu) =
\Cov(h(\bfx),  \nabla_\bfx h(\bfx))\tilde{F}(\bfx)\ubfu
\notag \\ &\quad
= \big[[\nabla_\bfx K_h]^\top\tilde{f}(\bfx) \quad [\nabla_\bfx K_h]^\top\tilde{g}(\bfx)\big]\ubfu.
\end{align}
Using \eqref{eq:part_1_mean_variance}, \eqref{eq:p1-var} and \eqref{eq:p1-h-cov}, we write the mean and variance,
\begin{align}
\label{eq:covariance_Px}
&\bbE[\bfp(\bfx)] = [\nabla_\bfx \tilde{h}(\bfx)]^\top \tilde{F}(\bfx) + a [\tilde{h}(\bfx) \quad \boldsymbol{0}_m^\top]^\top
 \notag\\
&\Var[\bfp(\bfx)] =  
\tilde{F}^\top(\bfx)\mathcal{H}_{\bfx,\bfx'} K_h \tilde{F}(\bfx)  
\notag\\ &\;
+
(\bfI_{m+1} \otimes \nabla_\bfx \tilde{h}(\bfx)^\top)K_{F}
 (\bfI_{m+1} \otimes \nabla_\bfx \tilde{h}(\bfx)) 
\\ &\;
+
\begin{bmatrix}
a^2 K_h + 2a[\nabla_\bfx K_h]^\top \tilde{f}(\bfx)& a[\nabla_\bfx K_h]^\top \tilde{g}(\bfx)\\
a\tilde{g}(\bfx)^\top [\nabla_\bfx K_h] &  \boldsymbol{0}_{m \times m}
\end{bmatrix} , \notag
\end{align}
from which the statement follows.
\end{proof}

Next, we describe the distribution of $\textit{CLC}(\bfx,\ubfu)$.

\begin{proposition}[\textbf{Gaussian distribution for CLC}]\label{prop:clc_mean_variance}
Given the distribution $\text{vec}(F(\bfx)) \sim \mathcal{GP}(\text{vec}(\tilde{F}(\bfx)), K_{F}(\bfx,\bfx'))$, the $\textit{CLC}(\bfx,\ubfu)$ is Gaussian with mean and variance:
\begin{subequations}\label{eq: clc_mean_variance}
\begin{align}
\mathbb{E}[\textit{CLC}(\bfx,\ubfu)] &\!=\! 
\mathbb{E}[\bfq(\bfx)]^\top  \ubfu   \\
\textit{Var}[\textit{CLC}(\bfx,\ubfu)] & \!=\!  \ubfu^\top \textit{Var}[\bfq(\bfx)] \ubfu,
\end{align}
\end{subequations}
where $
    \bfq(\bfx) := F^\top(\bfx) [\nabla_{\bfx} V(\bfx)] + [\alpha_V(V(\bfx)) \quad \boldsymbol{0}_m^\top ]^\top\in \mathbb{R}^{m+1}$ and $\mathbb{E}[\bfq(\bfx)]$, $ \textit{Var}[\bfq(\bfx)]$ are computed in \eqref{eq:Qx_mean_variance}.
\end{proposition}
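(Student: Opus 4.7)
The plan is to reduce this proposition to the simpler setting where only $F$ is stochastic, since the Lyapunov function $V$ is designed by the user and therefore deterministic. First I would rewrite the control Lyapunov condition in the same factored form as in the proof of Proposition~\ref{prop:cbc_mean_variance}, namely
\begin{equation*}
\textit{CLC}(\bfx,\ubfu) = \bigl[[\nabla_\bfx V(\bfx)]^\top F(\bfx) + [\alpha_V(V(\bfx))\ \boldsymbol{0}_m^\top]\bigr]\ubfu = \bfq(\bfx)^\top \ubfu.
\end{equation*}
Since $\nabla_\bfx V(\bfx)$ and $\alpha_V(V(\bfx))$ are deterministic, the random vector $\bfq(\bfx)$ depends affinely on $\text{vec}(F(\bfx))$, which is Gaussian by assumption. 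Affine transformations preserve Gaussianity, so $\bfq(\bfx)$ is Gaussian and hence $\bfq(\bfx)^\top\ubfu$ is a Gaussian scalar for each fixed $\ubfu$, establishing the distributional claim.

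Next, I would compute the mean by linearity of expectation,
\begin{equation*}
\mathbb{E}[\bfq(\bfx)] = \tilde F^\top(\bfx)\,\nabla_\bfx V(\bfx) + [\alpha_V(V(\bfx))\ \boldsymbol{0}_m^\top]^\top,
\end{equation*}
which immediately gives $\mathbb{E}[\textit{CLC}(\bfx,\ubfu)] = \mathbb{E}[\bfq(\bfx)]^\top \ubfu$. For the variance, the additive constant term contributes nothing, so I only need $\text{Var}[F^\top(\bfx)\nabla_\bfx V(\bfx)]$. Using the same Kronecker identity $\text{vec}(ABC)=(C^\top\otimes A)\text{vec}(B)$ invoked in \eqref{eq:kronecker_trick}, I would rewrite $F^\top(\bfx)\nabla_\bfx V(\bfx) = (\bfI_{m+1}\otimes [\nabla_\bfx V(\bfx)]^\top)\,\text{vec}(F(\bfx))$, yielding
\begin{align*}
\text{Var}[\bfq(\bfx)] = \bigl(\bfI_{m+1} &\otimes [\nabla_\bfx V(\bfx)]^\top\bigr)\,K_F(\bfx,\bfx)\\
&\cdot\bigl(\bfI_{m+1}\otimes \nabla_\bfx V(\bfx)\bigr),
\end{align*}
and therefore $\text{Var}[\textit{CLC}(\bfx,\ubfu)] = \ubfu^\top \text{Var}[\bfq(\bfx)]\ubfu$, which is the formula labeled \eqref{eq:Qx_mean_variance} in the statement.

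The proof is substantially shorter than that of Proposition~\ref{prop:cbc_mean_variance} because there is no product of two independent Gaussians (no analogue of $[\nabla_\bfx h]^\top F \ubfu$ where both factors are random), no Hessian-of-the-kernel term, and no cross-covariance to track. The only technical step that requires care is the Kronecker manipulation to express $F^\top\nabla_\bfx V$ as a linear function of $\text{vec}(F)$ in a way that cleanly exposes $\ubfu$ on the outside when forming $\bfq(\bfx)^\top\ubfu$; this is a direct reuse of the identity already established in \eqref{eq:kronecker_trick} and \eqref{eq:swapping-gradh-u}, so I anticipate no real obstacle.
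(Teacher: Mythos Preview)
Your proposal is correct and follows essentially the same route as the paper: factor $\textit{CLC}(\bfx,\ubfu)=\bfq(\bfx)^\top\ubfu$, note that $\nabla_\bfx V$ and $\alpha_V(V)$ are deterministic so $\bfq(\bfx)$ is an affine image of $\text{vec}(F(\bfx))$, and use the Kronecker identity $[\nabla_\bfx V(\bfx)]^\top F(\bfx)=(\bfI_{m+1}\otimes[\nabla_\bfx V(\bfx)]^\top)\text{vec}(F(\bfx))$ to read off the mean and variance in \eqref{eq:Qx_mean_variance}. Your explicit remark that affine maps preserve Gaussianity is a nice addition, since the paper asserts the Gaussian claim without spelling this out.
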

\begin{proof}
We can write the control Lyapunov condition as
\label{eq:gaussian_clf}
$    \textit{CLC}(\bfx,\ubfu)  =  [\nabla_{\bfx}  V(\bfx)]^\top F(\bfx)\ubfu + \alpha_{V}(V(\bfx)) = \bfq^\top(\bfx) \ubfu$.
%
We use the Kronecker product property  $\text{vec}(\bfA\bfB\bfC) = (\bfC^\top \otimes \bfA)\text{vec}(\bfB)$ to rewrite first term in $\bfq(\bfx)$ as:
\begin{equation*}
\begin{aligned}
[\nabla_{\bfx} V(\bfx)]^\top F(\bfx) 
&= (\bfI_{m+1} \otimes [\nabla_{\bfx} V(\bfx)]^\top)\text{vec}(F(\bfx)).
\end{aligned}    
\end{equation*}
Since $[\nabla_{\bfx} V(\bfx)]$, $\alpha_{V}(V(\bfx))$ are known and deterministic and $\text{vec}(F(\bfx)) \sim \mathcal{GP}(\text{vec}(\tilde{F}(\bfx)), K_{F}(\bfx,\bfx'))$, we can express the distribution of $\bfq(\bfx)$ as follows:
\begin{align}
\label{eq:Qx_mean_variance}
\mathbb{E}[\bfq(\bfx)] &\!=\!  \tilde{F}^\top(\bfx) [\nabla_{\bfx} V(\bfx)] +[\alpha_V(V(\bfx)) \quad \boldsymbol{0}_{m}^\top ]^\top \\
\textit{Var}[\bfq(\bfx)]  &\!=\! (\bfI_{m+1} \otimes [\nabla_{\bfx} V(\bfx)]^\top)K_{F}(\bfI_{m+1} \otimes [\nabla_{\bfx} V(\bfx)]) \nonumber.
\end{align}
The result follows from plugging \eqref{eq:Qx_mean_variance} into $\textit{CLC}(\bfx,\ubfu)$.
\end{proof}

We use the mean and variance of $\textit{CBC}(\bfx,\ubfu)$ and $\textit{CLC}(\bfx,\ubfu)$ obtained above to approximate the probabilistic safety and stability constraints in \eqref{eq:CLF_CBF_QP_Gaussian}.


\begin{proposition}[\textbf{Probabilistic CLF-CBF SOCP}]\label{prop:gaussian_socp}
Given a user-specified risk tolerance $p \in [0,1)$, let $c(p) = \sqrt{\frac{p}{1-p}}$. The optimization problem \eqref{eq:CLF_CBF_QP_Gaussian} can be formulated as the following second-order cone program:
\begin{equation}
\label{eq:gaussian_socp_form}
\begin{aligned}
    &\min_{\ubfu \in \underline{\calU}, \delta\in \mathbb{R},l\in \mathbb{R} } \, \, l& \qquad \qquad \\
    &\mathrm{s.t.} \, \, 
    \delta - \mathbb{E}[\bfq(\bfx)]^\top\ubfu  \geq c(p)\sqrt{\ubfu^\top\textit{Var}[\bfq(\bfx)]\ubfu},
    \\
    &\qquad \quad \mathbb{E}[\bfp(\bfx)]^\top\ubfu \geq c(p)\sqrt{\ubfu^\top\textit{Var}[\bfp(\bfx)]\ubfu},
     \\
    &l+1 \geq
    \sqrt{ \|2L(\bfx)^\top(\ubfu-\underline{\tilde{\bfk}}(\bfx))\|^2 + (2\sqrt{\lambda}\delta)^2 + (l-1)^2}
\end{aligned}
\end{equation}
where $\bfp$, $\bfq$ are defined in
Propositions~\ref{prop:cbc_mean_variance} and~\ref{prop:clc_mean_variance}, resp.
\end{proposition}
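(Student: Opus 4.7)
The plan is to convert each probabilistic constraint into a deterministic second-order cone constraint via Cantelli's inequality, then plug in the mean and variance expressions from Propositions~\ref{prop:cbc_mean_variance} and~\ref{prop:clc_mean_variance}, and finally recast the quadratic objective as an SOC constraint through an epigraph reformulation.

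For the CBC chance constraint $\mathbb{P}(\textit{CBC}(\bfx,\ubfu) \geq 0) \geq p$, I would apply Cantelli's inequality to the random variable $\textit{CBC}(\bfx,\ubfu) = \bfp(\bfx)^\top\ubfu$. Specifically, Cantelli gives $\mathbb{P}(X \leq \mathbb{E}[X] - k\sigma_X) \leq \frac{1}{1+k^2}$ for any $k \geq 0$. Imposing that $\mathbb{E}[\textit{CBC}] \geq k\sqrt{\Var[\textit{CBC}]}$ with $k = c(p) = \sqrt{p/(1-p)}$ then implies $\mathbb{P}(\textit{CBC} < 0) \leq 1-p$, i.e., the desired $p$-level chance constraint. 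Substituting the mean and variance from \eqref{eq: cbc_mean_variance} yields exactly the SOC constraint $\mathbb{E}[\bfp(\bfx)]^\top \ubfu \geq c(p)\sqrt{\ubfu^\top \Var[\bfp(\bfx)]\ubfu}$. An entirely symmetric argument, applying Cantelli to the upper tail of $\textit{CLC}(\bfx,\ubfu) = \bfq(\bfx)^\top\ubfu$ with threshold $\delta$, produces the first constraint in \eqref{eq:gaussian_socp_form}.

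The final step is the epigraph reformulation of the quadratic objective. Introducing an auxiliary variable $l$ and enforcing $l \geq \|L(\bfx)^\top(\ubfu - \underline{\tilde{\bfk}}(\bfx))\|^2 + \lambda\delta^2$, I would minimize $l$ subject to this inequality. Using the standard identity $t \geq \|\bfa\|^2$ iff $t+1 \geq \sqrt{\|2\bfa\|^2 + (t-1)^2}$ (verified by squaring and cancelling), the epigraph inequality with $\bfa = (L(\bfx)^\top(\ubfu-\underline{\tilde{\bfk}}(\bfx)),\sqrt{\lambda}\delta)$ becomes the third SOC constraint in \eqref{eq:gaussian_socp_form}. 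The three resulting conic constraints are linear-plus-norm in the decision variables $(\ubfu,\delta,l)$, so the problem is a genuine SOCP.

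The main conceptual obstacle is that the CBC is \emph{not} Gaussian, since it involves an inner product of two independent Gaussian processes $\nabla_\bfx h(\bfx)$ and $F(\bfx)\ubfu$ (cf.\ Proposition~\ref{prop:cbc_mean_variance}), so one cannot justify the reformulation by quantiles of a Gaussian. This is precisely why Cantelli's inequality is the right tool: it is distribution-free and requires only finite mean and variance, which is all Proposition~\ref{prop:cbc_mean_variance} provides. A minor technical point to flag is that Cantelli yields a \emph{sufficient} (and in general conservative) condition for the chance constraint; the SOCP is therefore an inner approximation of the chance-constrained program \eqref{eq:CLF_CBF_QP_Gaussian}, and any feasible SOCP solution is feasible for \eqref{eq:CLF_CBF_QP_Gaussian} at risk level $p$.
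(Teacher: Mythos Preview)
Your proposal is correct and follows essentially the same route as the paper: Cantelli's inequality to turn each chance constraint into a mean-versus-scaled-standard-deviation inequality, substitution of the mean/variance formulas from Propositions~\ref{prop:cbc_mean_variance} and~\ref{prop:clc_mean_variance}, and the standard rotated-cone epigraph trick for the quadratic objective. Your remarks that the CBC is non-Gaussian (motivating the distribution-free Cantelli bound) and that the resulting SOCP is only an \emph{inner} approximation of~\eqref{eq:CLF_CBF_QP_Gaussian} are in fact sharper than the paper's own presentation, which states the two problems as equivalent without flagging the conservatism.
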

\begin{proof}
To deal with the probabilistic constraints in~\eqref{eq:CLF_CBF_QP_Gaussian}, we employ Cantelli's inequality \cite{CantelliSuiCD}. For any scalar $\gamma \geq 0$,
\begin{equation*}
\begin{aligned}
\label{eq:cantelli_1}
    \mathbb{P}(\textit{CBC}(\bfx,\ubfu) \geq \mathbb{E}[\textit{CBC}(\bfx,\ubfu))] -\gamma | \bfx, \ubfu) \geq \\
    1 - \frac{\textit{Var}[\textit{CBC}(\bfx,\ubfu)]}{\textit{Var}[\textit{CBC}(\bfx,\ubfu)]+\gamma^2}  .
\end{aligned}
\end{equation*}
Given this inequality, and since we want $\mathbb{P}(\textit{CBC}(\bfx,\ubfu) \ge 0) \geq p$, we choose $\gamma =  \mathbb{E}[\textit{CBC}(\bfx,\ubfu)]$
%
%
and require the lower bound to be greater than or equal to $p$, i.e.,
$
1 - \frac{\textit{Var}[\textit{CBC}(\bfx,\ubfu)]}{\textit{Var}[\textit{CBC}(\bfx,\ubfu)]+\gamma^2} \geq p$. 
The equation can be rearranged into
\begin{equation*}
\label{eq:cantelli_3}
    \mathbb{E}[\textit{CBC}(\bfx,\ubfu)] = \gamma \geq \sqrt{\frac{p}{1-p}\textit{Var}[\textit{CBC}(\bfx,\ubfu)]},
\end{equation*}
which corresponds to the safety constraint in \eqref{eq:gaussian_socp_form}.

Next, we show that this is a second-order cone (SOC) constraint. By \eqref{eq: cbc_mean_variance}, given that $\tilde{h}$, $\nabla\tilde{h}$ and $\tilde{F}$ are known and deterministic, the expectation $\mathbb{E}[\textit{CBC}(\bfx,\ubfu)] = \mathbb{E}[\bfp(\bfx)]^\top\ubfu $ is affine in $\ubfu$. Since $\textit{Var}[\bfp(\bfx)]$ is positive semi-definite,
\begin{equation}
\label{eq:cbc_var_norm}
    \sqrt{\textit{Var}[\textit{CBC}(\bfx,\ubfu)]} = \sqrt{\ubfu^\top\textit{Var}[\bfp(\bfx)]\ubfu} = \|\bfD(\bfx)\ubfu\|
\end{equation}
where $\bfD(\bfx)^\top \bfD(\bfx) = \textit{Var}[\bfp(\bfx)]$. Acccording to~\cite{alizadeh2003second}, the safety constraint in \eqref{eq:gaussian_socp_form} is a valid SOC constraint.

For stability, the CLC condition can be constructed using a similar approach with Cantelli's inequality, resulting in~\eqref{eq:gaussian_socp_form}. By \eqref{eq: clc_mean_variance}, we know that the expectation is affine in $\ubfu$ and the variance is quadratic in terms of $\ubfu$, similar to \eqref{eq:cbc_var_norm}. This shows that the CLC condition is also a valid SOC constraint.

Our last step is to reformulate the minimization of the objective function as a linear objective with an SOC constraint, resulting in the standard SOCP in~\eqref{eq:gaussian_socp_form}. We introduce a new variable $l$ so that the problem in \eqref{eq:CLF_CBF_QP_Gaussian} is equivalent to
\begin{align}
\label{eq:CLF_CBF_QP_Gaussian_Convert}
    & \min_{\ubfu \in \underline{\calU}, \delta\in \mathbb{R},l\in \mathbb{R} } \, \, l \notag \\
    \text{s.t.} \, \, 
    &\mathbb{P}(\textit{CLC}(\bfx, \ubfu) \leq \delta) \geq p, \quad \mathbb{P}(\textit{CBC}(\bfx, \ubfu) \geq 0) \geq p, \notag \\
    &\|L(\bfx)^\top(\ubfu - \underline{\tilde{\bfk}}(\bfx))\|^2 + \lambda \delta^2 \leq l.
\end{align}
The last constraint in \eqref{eq:CLF_CBF_QP_Gaussian_Convert} corresponds to a rotated second-order cone, $\mathcal{Q}^n_{rot} \coloneqq \{(\bfx_{r},y_{r},z_{r}) \in \mathbb{R}^{n+2}\, | \,\|\bfx_{r}\|^2 \leq y_{r}z_{r}, y_{r} \geq 0, z_{r} \geq 0 \}$, which can be converted into a standard SOC constraint~\cite{alizadeh2003second},
$
\left\| \begin{bmatrix} 2 \bfx_{r} & y_r - z_r\end{bmatrix}^\top \right\| \leq y_r+z_r.
$
Let $y_r = l$, $z_r = 1$ and consider the constraint $\|L(\bfx)^\top(\ubfu-\underline{\tilde{\bfk}}(\bfx))\|^2 + \lambda \delta^2 \leq l$. Multiplying both sides by $4$ and adding $(l-1)^2$, makes the constraint equivalent to 
\[
4\|L(\bfx)^\top(\ubfu-\underline{\tilde{\bfk}}(\bfx))\|^2 + 4\lambda\delta^2 + (l-1)^2 \leq (l+1)^2.
\]
Taking a square root on both sides, we end up with $\sqrt{ \|2L(\bfx)^\top(\ubfu-\underline{\tilde{\bfk}}(\bfx))\|^2 + (2\sqrt{\lambda}\delta)^2 + (l-1)^2} \leq l+1$, which is equivalent to the third constraint in~\eqref{eq:gaussian_socp_form}.
\end{proof}

\begin{remark}[\textbf{Effects of risk-tolerance $p$ and variance}]
When $p=0$, the probabilistic CLF-CBF-SOCP \eqref{eq:gaussian_socp_form} reduces to the original CLF-CBF-QP \eqref{eq:QP_origin}. As $p$ and/or $\textit{Var}[\bfp(\bfx)]$, $\textit{Var}[\bfq(\bfx)]$ increase, the feasible region of \eqref{eq:gaussian_socp_form} gets smaller, and the optimal value worsens, cf. Fig.~\ref{fig:2b} for an illustration. 
\end{remark}

\section{Robust Safe Control}
\label{sec:robust_control}

In this section, we develop a solution to Problem~\ref{prob: robust_clf_cbf}. 
Let $\tilde{F}$ denote the estimated system dynamics, $\tilde{h}$, $\nabla \tilde{h}$ the estimated barrier function and its gradient, and let $e_F : \mathbb{R}^{n \times (m+1)} \mapsto \mathbb{R}_{\geq 0}$, $e_h : \mathbb{R} \mapsto \mathbb{R}_{\geq 0}$, and $e_{\nabla h}: \mathbb{R}^n \mapsto \mathbb{R}_{\geq 0}$ be associated error bounds. For convenience, for each $\bfx \in \mathcal{X}$, we denote $D_F(\bfx) := F(\bfx) - \tilde{F}(\bfx)$, $d_h(\bfx) := h(\bfx) -\tilde{h}(\bfx)$ and $\bfd_{\nabla h}(\bfx) := \nabla h(\bfx) - \nabla \tilde{h}(\bfx)$. 
By \eqref{error_dynamics} and~\eqref{error_barrier}, we have
\begin{equation}
\label{eq: error_bound}
    \|D_F (\bfx)\|  \leq e_F(\bfx), \: |d_h (\bfx)|  \leq e_h(\bfx), \:  \|\bfd_{\nabla h} (\bfx) \|   \leq e_{\nabla h}(\bfx).
\end{equation}
Using this notation, we can rewrite $\textit{CBC}(\bfx,\ubfu)$ as
\begin{equation*}
\begin{aligned}
&\textit{CBC}(\bfx, \ubfu) = [\nabla h(\bfx)]^\top  F(\bfx)\ubfu  +  \alpha_h( h(\bfx)) \\
&= [\nabla \tilde{h}(\bfx)]^\top \tilde{F}(\bfx)\ubfu +  \bfd_{\nabla h}^\top(\bfx)\tilde{F}(\bfx)\ubfu+ [\nabla \tilde{h}(\bfx)]^\top D_F(\bfx)\ubfu \\ 
&\quad + \bfd_{\nabla h}^\top(\bfx) D_F(\bfx)\ubfu +  \alpha_h( \tilde{h}(\bfx)+d_h(\bfx)).
\end{aligned}
\end{equation*}
Let $\tilde{\bfp}(\bfx) := \tilde{F}^\top(\bfx)\nabla \tilde{h}(\bfx) $. We group the error term in the expression for $\textit{CBC}(\bfx, \ubfu)$ in the variable $d_{\textit{CBC}}(\bfx, \ubfu) := \textit{CBC}(\bfx, \ubfu)- \tilde{\bfp}(\bfx)^\top \ubfu$.
Thus, $\textit{CBC}(\bfx,\ubfu) \geq 0$ is satisfied if
%
\begin{align*}
 \min_{D_F,\bfd_{\nabla h}, d_h}\! \textit{CBC}(\bfx, \ubfu) = \tilde{\bfp}(\bfx)^\top \ubfu  + \! \min_{D_F,\bfd_{\nabla h}, d_h} \! d_{\textit{CBC}}(\bfx, \ubfu)  \ge 0.
\end{align*}
%
Similarly, let 
$\tilde{\bfq}(\bfx) := \tilde{F}^\top(\bfx)\nabla V(\bfx)   + [\alpha_{V}(V(\bfx))  \quad \boldsymbol{0}_{m}^\top]^\top$ and $d_{\textit{CLC}}(\bfx, \ubfu) := [\nabla V(\bfx)]^\top D_F(\bfx)\ubfu$, a robust version of the stability constraint $\textit{CLC}(\bfx,\ubfu) \leq \delta$ can be written as:
%
%
\begin{equation}
\label{eq:robust_clc}
\max_{D_F} \textit{CLC}(\bfx, \ubfu) = \tilde{\bfq}(\bfx)^\top \ubfu  + \max_{D_F}  d_{\textit{CLC}}(\bfx, \ubfu) \leq \delta .
\end{equation}
This leads us to the following robust reformulation of the original control synthesis problem in~\eqref{eq:QP_origin},
\begin{equation}\label{eq:QP_robust}
\begin{aligned}
    &\min_{\ubfu \in \underline{\calU}, \delta\in \mathbb{R},l\in \mathbb{R} } \, \, l \qquad \qquad \\
    \mathrm{s.t.} \, \,   
    &\tilde{\bfq}(\bfx)^\top \ubfu + \max_{D_F} \; d_{\textit{CLC}}(\bfx, \ubfu) \le \delta \\
    & \tilde{\bfp}(\bfx)^\top \ubfu + \min_{D_F,  d_h, \bfd_{\nabla h}} d_{\textit{CBC}}(\bfx, \ubfu) \ge 0
    \\
    &l+1 \geq
    \sqrt{ \|2L(\bfx)^\top(\ubfu-\underline{\tilde{\bfk}}(\bfx))\|^2 + (2\sqrt{\lambda}\delta)^2 + (l-1)^2}.
\end{aligned}
\end{equation}
Note that we used the same approach as in the proof of Proposition~\ref{prop:gaussian_socp} to reformulate the original quadratic objective with a linear objective plus a SOC constraint. The second constraint in \eqref{eq:QP_robust} requires solving $\min_{D_F,  d_h, \bfd_{\nabla h}} d_{\textit{CBC}}(\bfx, \ubfu)$ subject to \eqref{eq: error_bound}. In general, this is a non-convex 
constrained quadratic program which does not have a closed-form expression of the minimizer as a function of~$\ubfu$. Instead, we make the second constraint in \eqref{eq:QP_robust} more conservative using the Cauchy-Schwarz inequality, which leads to a convex SOCP, whose optimal solution is guaranteed to be feasible for \eqref{eq:QP_robust}.

\begin{proposition}[\textbf{Robust CLF-CBF SOCP}]\label{prop: worst_case_error}
Let $\tilde{F}$, $\tilde{h}$, $\nabla \tilde{h}$ denote estimates of the system dynamics and barrier function, with error bounds in \eqref{eq: error_bound}. Then, the feasible set of the following SOCP is included in the feasible set of \eqref{eq:QP_robust}: 
%
\begin{align}
\label{eq:worst_case_SOCP_formulation}
    & \min_{\ubfu \in \underline{\calU}, \delta\in \mathbb{R}, p\in \mathbb{R}, q \in \mathbb{R}, l\in \mathbb{R}} \, \, l \notag \\
    \mathrm{s.t.} \, \, 
    &\delta - \tilde{\bfq}(\bfx)^\top \ubfu \geq e_F(\bfx)\|\nabla V(\bfx)\| \|\ubfu\|,  \notag \\
    & p \geq e_{\nabla h}(\bfx)\|\tilde{F}(\bfx) \ubfu \|  , \notag \\
    & q \geq \Big(e_F(\bfx)\|\nabla \tilde{h}(\bfx)\|  + e_{\nabla h}(\bfx)e_F(\bfx)\Big)\|\ubfu \| , \notag \\
    & [\nabla \tilde{h}(\bfx)]^\top \tilde{F}(\bfx)\ubfu + \alpha_h(\tilde{h}(\bfx) - e_h(\bfx)) \geq p + q , \notag \\
    &l+1 \geq
    \sqrt{ \|2L(\bfx)^\top(\ubfu-\underline{\tilde{\bfk}}(\bfx))\|^2 + (2\sqrt{\lambda}\delta)^2 + (l-1)^2} 
\end{align}
\end{proposition}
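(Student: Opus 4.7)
The strategy is to verify that every SOCP-feasible tuple $(\ubfu, \delta, p, q, l)$ of \eqref{eq:worst_case_SOCP_formulation} yields a triple $(\ubfu, \delta, l)$ satisfying the three constraints of \eqref{eq:QP_robust}. The third, objective-related constraint is identical in both problems, so the work reduces to showing that the first SOCP constraint implies the robust CLC constraint, and that the next three SOCP constraints together imply the robust CBC constraint. The main tools I would invoke are the Cauchy--Schwarz inequality, the error bounds \eqref{eq: error_bound}, and the monotonicity of $\alpha_h$, which together replace the worst-case optimizations by closed-form expressions that are affine plus norm terms in $\ubfu$.

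For the stability constraint, I would observe that $d_{\textit{CLC}}(\bfx,\ubfu) = [\nabla V(\bfx)]^\top D_F(\bfx)\ubfu$ and apply Cauchy--Schwarz together with $\|D_F(\bfx)\| \le e_F(\bfx)$ to obtain $\max_{D_F} d_{\textit{CLC}}(\bfx,\ubfu) \le e_F(\bfx)\,\|\nabla V(\bfx)\|\,\|\ubfu\|$. The first SOCP constraint then immediately yields $\tilde{\bfq}(\bfx)^\top \ubfu + \max_{D_F} d_{\textit{CLC}}(\bfx,\ubfu) \le \delta$, matching \eqref{eq:robust_clc}.

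For the safety constraint, I would first expand $\textit{CBC}(\bfx,\ubfu)$ using the decompositions $F = \tilde{F} + D_F$, $\nabla h = \nabla \tilde{h} + \bfd_{\nabla h}$, and $h = \tilde{h} + d_h$, producing one nominal term $[\nabla \tilde{h}]^\top \tilde{F}\ubfu$, three bilinear error terms, and the term $\alpha_h(\tilde{h}+d_h)$. The three error terms admit individual Cauchy--Schwarz lower bounds: $\bfd_{\nabla h}^\top \tilde{F}\ubfu \ge -e_{\nabla h}(\bfx)\,\|\tilde{F}(\bfx)\ubfu\|$, $[\nabla \tilde{h}]^\top D_F \ubfu \ge -e_F(\bfx)\,\|\nabla \tilde{h}(\bfx)\|\,\|\ubfu\|$, and $\bfd_{\nabla h}^\top D_F\ubfu \ge -e_{\nabla h}(\bfx)\, e_F(\bfx)\,\|\ubfu\|$, while monotonicity of the extended class $\mathcal{K}_\infty$ function gives $\alpha_h(\tilde{h}+d_h) \ge \alpha_h(\tilde{h}-e_h)$. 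Summing these bounds and invoking the SOCP constraints $p \ge e_{\nabla h}\|\tilde{F}\ubfu\|$, $q \ge (e_F\|\nabla \tilde{h}\| + e_{\nabla h}e_F)\|\ubfu\|$, together with $[\nabla \tilde{h}]^\top \tilde{F}\ubfu + \alpha_h(\tilde{h}-e_h) \ge p+q$, delivers $\tilde{\bfp}(\bfx)^\top \ubfu + \min_{D_F,d_h,\bfd_{\nabla h}} d_{\textit{CBC}}(\bfx,\ubfu) \ge 0$, which is the desired robust CBC constraint.

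I expect the main obstacle to be the bilinear cross-term $\bfd_{\nabla h}^\top D_F \ubfu$, which depends jointly on two error variables and, if minimized exactly, would make the inner program non-convex; bounding it by the product of norms $e_{\nabla h}\, e_F\, \|\ubfu\|$ via Cauchy--Schwarz is what enables the convex SOCP surrogate and explains why the proposition asserts feasible-set inclusion, not equality. A brief final step is to confirm that each new constraint is a standard second-order cone constraint: the right-hand sides $\|\tilde{F}\ubfu\|$, $\|\ubfu\|$, and $\|\nabla V\|\,\|\ubfu\|$ are norms of affine functions of $\ubfu$, the left-hand sides are affine in $(\ubfu,\delta,p,q)$, and the objective-related SOC constraint has already been derived in Proposition~\ref{prop:gaussian_socp}.
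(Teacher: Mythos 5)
Your proposal is correct and follows essentially the same route as the paper: Cauchy--Schwarz bounds on each bilinear error term, monotonicity of $\alpha_h$ to handle $d_h$, the split of the sum of two norms into separate SOC constraints via the auxiliary variables $p$ and $q$, and the observation that the independent term-by-term bounding is what sacrifices tightness and yields inclusion rather than equality. The only cosmetic difference is that the paper states the CLC bound $\max_{D_F} d_{\textit{CLC}} = e_F\|\nabla V\|\|\ubfu\|$ as an achieved equality while you only use the (sufficient) inequality direction.
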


\begin{proof}
%
%
%
%
The stability constraint in \eqref{eq:QP_robust} is reformulated using:
\begin{equation*}
    \max_{\|D_F (\bfx)\|  \leq e_F(\bfx)}d_{\textit{CLC}}(\bfx, \ubfu) = e_F(\bfx)\|\nabla V(\bfx)\|\|\ubfu\|.
\end{equation*}
%
%
%
For the safety constraint in \eqref{eq:QP_robust}, note that
\begin{align}
&\min_{D_F,  d_h, \bfd_{\nabla h}} d_{\textit{CBC}}(\bfx, \ubfu) \notag
\\
& = \min_{D_F,  \bfd_{\nabla h}} \Big(\bfd_{\nabla h}^\top(\bfx)\tilde{F}(\bfx)\ubfu + [\nabla \tilde{h}(\bfx)]^\top D_F(\bfx)\ubfu + \notag\\ 
&\qquad \bfd_{\nabla h}^\top(\bfx) D_F(\bfx)\ubfu \Big) + \min_{d_h} \alpha_h(\tilde{h}(\bfx) + d_h(\bfx)).
\end{align}
Since $e_h(\bfx) \geq 0$ and $\alpha_h$ is an extended class $\mathcal{K}_\infty$ function,
\begin{equation}
\label{eq:robust_cbc_rhs}
    \min_{|d_h(\bfx)| \leq e_{h}(\bfx)} \hspace*{-2pt} \alpha_h(\tilde{h}(\bfx) + d_h(\bfx)) \!=\! \alpha_h(\tilde{h}(\bfx) - e_h(\bfx)) . 
\end{equation}
Applying the Cauchy-Schwarz inequality on each term,
\begin{align*}
& \min_{D_F,  d_h, \bfd_{\nabla h}} d_{\textit{CBC}}(\bfx, \ubfu)
\ge -\|\bfd_{\nabla h}\|\|\tilde{F}(\bfx)\ubfu\| 
\notag\\  &\qquad 
- \|\nabla \tilde{h}(\bfx)\|\|D_F(\bfx)\ubfu\| -  \|\bfd_{\nabla h}(\bfx)\|\| D_F(\bfx)\ubfu\| 
\notag\\  &\qquad 
+  \alpha_h(\tilde{h}(\bfx) - e_h(\bfx))
\notag\\
&\quad \ge - e_{\nabla h}(\bfx) \|\tilde{F}(\bfx)\ubfu\| - e_F(\bfx)\|\nabla \tilde{h}(\bfx)\| \|\ubfu\| -
\notag\\
&\qquad e_{\nabla h}(\bfx) e_F(\bfx)\|\ubfu\| + \alpha_h(\tilde{h}(\bfx) - e_h(\bfx)).
\end{align*}
In the last step, we minimized each term independently, so the lower bound is not tight. We write the safety constraint~as
\begin{align}
\label{eq:robust_cbc_constraint}
    &e_{\nabla h}(\bfx)\|\tilde{F}(\bfx) \ubfu \|
    + (e_F(\bfx)\|\nabla \tilde{h}(\bfx)\|  + e_{\nabla h}(\bfx)e_F(\bfx))\|\ubfu \| \notag \\ 
    & \leq [\nabla \tilde{h}(\bfx)]^\top \tilde{F}(\bfx)\ubfu + \alpha_h(\tilde{h}(\bfx) - e_h(\bfx)).
\end{align}    
%
Constraints of the form $\|\bfA\bfz - \boldsymbol{a}\| + \|\bfB\bfz - \boldsymbol{b}\| \leq \boldsymbol{c}^\top \bfz$ can be replaced by the set of constraints $\|\bfA\bfz - \boldsymbol{a}\| \leq p$, $\|\bfB\bfz - \boldsymbol{b}\| \leq q$, $p+q \leq \boldsymbol{c}^\top \bfz$ combined. Thus,~\eqref{eq:robust_cbc_constraint} is equivalent to the second, third, and fourth constraints in \eqref{eq:worst_case_SOCP_formulation} together.
\end{proof}

\begin{remark}[\textbf{Effects of error bounds}]
If there are no errors in either the dynamics or the barrier function ($e_F \equiv e_h \equiv e_{\nabla h} \equiv 0$), then the robust CLF-CBF SOCP \eqref{eq:worst_case_SOCP_formulation} reduces to a CLF-CBF QP \eqref{eq:QP_origin}. If $e_F \equiv 0$ while $e_h(\bfx), e_{\nabla h}(\bfx) > 0$, the result in Proposition~\ref{prop: worst_case_error} recovers \cite[Proposition 2]{Long_learningcbf_ral21}. As the error bounds $e_F, e_h, e_{\nabla h}$ increase, the feasible region of~\eqref{eq:worst_case_SOCP_formulation} gets smaller and the optimal solution worsens. Also, note that the choice of kernel function, $K_F(\bfx,\bfx) = \frac{e^2_F(\bfx)}{c^2(p)}\bfI_{(m+1)n}$, reduces the inequality for stability in \eqref{eq:gaussian_socp_form} to that in \eqref{eq:worst_case_SOCP_formulation}. 
\end{remark}

\section{Evaluation}\label{evaluation}

In this section, we present an approach to estimate the unknown dynamics of a mobile robot, and construct CBF constraints online. Then, we evaluate our safe control synthesis using the estimated robot dynamics and CBFs in autonomous navigation tasks in $10$ simulated environments, containing obstacles a priori unknown to the robot.



%
%

%
\begin{figure}[t]
  \centering
  \subcaptionbox{Pybullet Simulator\label{fig:2a}}{\includegraphics[width=0.47\linewidth]{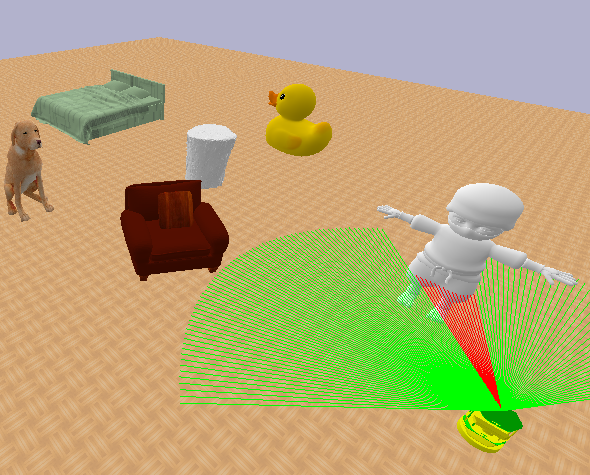}}%
  \hfill%
  \subcaptionbox{Probabilistic Trajectory\label{fig:2b}}{\includegraphics[width=0.48\linewidth]{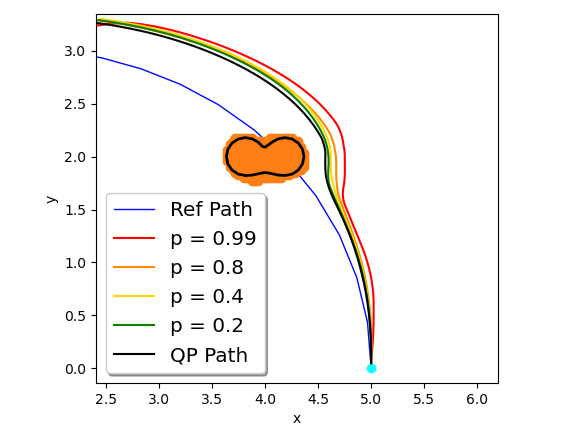}}\\
  \caption{(a) is the Pybullet simulation environment where we conduct our experiments. (b) shows the results in a region of an environment, where the probabilistic ($p=0.2, 0.4, 0.8, 0.99$) controller and QP controller both succeed. The ground-truth obstacle surface is shown in black while the estimated obstacles is shown in orange.}
  \vspace*{-3ex}
\end{figure}

\subsection{System Dynamics Estimation}\label{sec:dynamics_estimate}

We consider a Turtlebot robot simulated in the PyBullet simulator~\cite{coumans2020} (see Fig.~\ref{fig:2a}). We first present a learning approach to model the unknown dynamics of the TurtleBot using training data collected from the PyBullet simulator. The robot state and input are $\bfx := [x, y,\mu]^\top \in \mathbb{R}^2 \times [-\pi,\pi)$ and $\ubfu := [1, v,\omega]^\top \in \{1\} \times \mathbb{R}^2$, respectively. 
We collect a dataset $\mathcal{D} = \{t_{0:N}^{(i)}, \bfx_{0:N}^{(i)}, \ubfu_{0:N}^{(i)}\}_{i=1}^{D}$ of  $D=40000$ state sequences $\bfx_{0:N}^{(i)}$ obtained by applying random control inputs $\ubfu_{0:N}^{(i)}$ to the robot with initial condition $\bfx_{0}^{(i)}$ at time intervals of $\tau = 0.02$ seconds. For each trajectory $i$, a constant control input is applied for $N = 5$ time steps.
%
%



We employ a neural ODE network~\cite{chen2018neuralode} to approximate the unknown robot dynamics $F$ with a neural network $F_{\bftheta}$ based on the dataset $\mathcal{D}$. A forward pass through the ODE network is obtained using an ODE solver:
\begin{equation*}
\{\tilde{\bfx}_{1}^{i}, \tilde{\bfx}_{2}^{i}, \cdots, \tilde{\bfx}_{N}^{i}\} = \text{ODESolve}(\bfx_0^{i},F_{\bftheta}(\cdot)\ubfu^i, t_1^{i}, \cdots, t_N^{i}). 
\end{equation*}
We use a loss function,
\begin{equation}
\begin{aligned}
    &\min_{\boldsymbol{\theta}} \sum_{i=1}^{D} \sum_{j=1}^{N} \ell(\bfx_{j}^{(i)}, \tilde{\bfx}_{j}^{(i)}), \\
    \text{s.t.} \, \, &\dot{\tilde{\bfx}}^{(i)}(t) = F_{\bftheta}(\tilde{\bfx}^{(i)}(t)) \ubfu^{(i)}(t), \quad \tilde{\bfx}^{(i)}(j\tau) = \tilde{\bfx}^{(i)}_j,\\
    &\ubfu^{(i)}(t) \equiv \ubfu^{(i)}_j \;\;\text{for}\; t \in [j\tau,(j+1)\tau),
\end{aligned}
\end{equation}
where $\ell(\bfx, \tilde{\bfx}) = \|[x, y,\cos \mu,\sin \mu]^\top \!-\! [\tilde{x}, \tilde{y},\cos \tilde{\mu},\sin \tilde{\mu}]^\top\|^2$. To update the weights $\bftheta$, the gradient of the loss function is back-propagated by solving another ODE with adjoint states backwards in time. Please refer to \cite{chen2018neuralode} for details.

%
%
%

Gal and Ghahramani~\cite{2015yarin} showed that introducing dropout layers in a neural network is approximately equivalent to performing deep Gaussian Process regression.
We use a $6$-layer fully-connected neural network with $\tanh$ activations and $800$ neurons in each layer to model $F_{\bftheta}$, and apply dropout to each hidden layer with rate $0.05$. 
Given a query state $\bfx \in \calX$, Monte-Carlo estimates of the predictive mean $\tilde{F}_{\bftheta}(\bfx)$ and element-wise standard deviation $\tilde{\Sigma}(\bfx)$ of the dynamics are obtained with $T = 100$ stochastic forward passes through the dropout neural network model. We use $\tilde{F}_{\bftheta}(\bfx)$ for the mean of system dynamics and  $K_F(\bfx,\bfx) =  \diag(\text{vec}(\tilde{\Sigma}(\bfx))^2)$ for the variance of the dynamics. To obtain worst-case error bounds $e_F(\bfx)$, we set  $e_F(\bfx) = \|3.89\tilde{\Sigma}(\bfx)\|$ (99.99\% \textit{confidence}). 

In our experiment, no external disturbances are added to the system dynamics model. Given $M = 5000$ random-sampled different state control sequences $\{\bfx_i,\ubfu_i\}_{i=1}^{M}$ as test data, we consider the following test-time loss function, 
%
$L = \frac{1}{M} \sum_{i=1}^{M} \ell(F(\bfx_{i})\ubfu_i, \tilde{F}(\bfx_{i})\ubfu_i)$.
%
Our learned dynamics model is quite accurate, and the average test loss is $L = 0.0037$.

\subsection{Online CBF Estimation}
\label{sec:sdf_estimates}

\begin{table}[t]
\centering
\caption{Empirical SDF estimation error $\mathcal{E}$ and dropout-network SDF estimation error averaged across $8$ object instances under different LiDAR measurement noise standard deviation $\sigma$.}
\resizebox{\linewidth}{!}{
\begin{tabular}{ |c|c|c| }
\hline
LiDAR Noise $\sigma$ & SDF Empirical Error & SDF Dropout Error\\
\hline
$0.01$ & $0.0173$ & $0.0132$ \\
$0.02$ & $0.0288$ & $0.0184$ \\
$0.05$ & $0.0463$ & $0.0242$ \\
\hline 
\end{tabular}}
\label{table: sdf_error}
\vspace*{-1ex}
\end{table}

\begin{figure}[t]
\centering
\subcaptionbox{Training data \label{fig:1a}}{\includegraphics[width=0.25\linewidth]{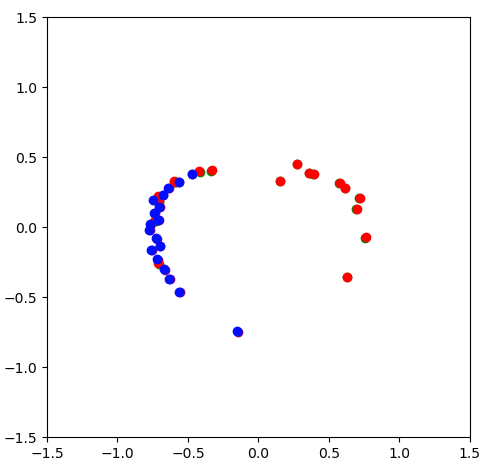}}%
\hfill%
\subcaptionbox{Mean SDF \label{fig:1b}}{\includegraphics[width=0.25\linewidth]{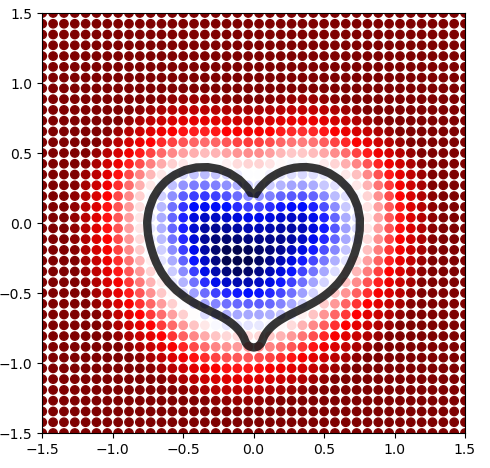}}%
\hfill%
\subcaptionbox{Variance \label{fig:1c}}{\includegraphics[width=0.245\linewidth]{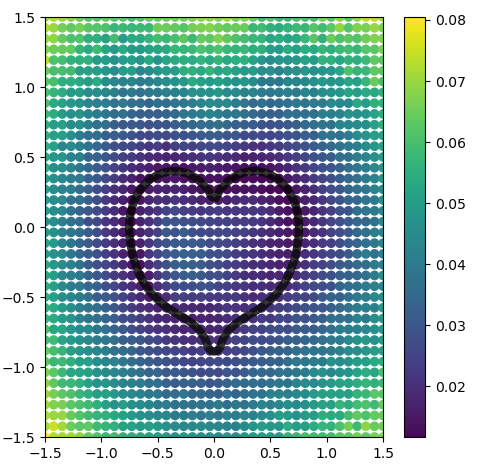}}%
\hfill%
\subcaptionbox{$\mathbb{P}(\tilde{\varphi} \!\leq\! 0) \!=\! 0.95\!$ \label{fig:1d}}{\includegraphics[width=0.25\linewidth]{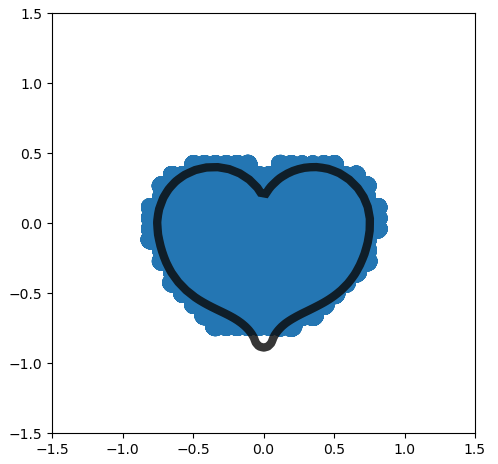}}%
\caption{Shape estimation with dropout neural network. (a) shows the training data. 
(b) shows the estimated mean SDF results. The black heart curve shows the ground-truth obstacle boundary, while colored regions are level-sets of the SDF estimate. The white region denotes the estimated obstacle boundary. The blue (resp. red) region denotes negative (resp. positive) signed distance. In (c), the variance of the SDF estimate is shown. In (d), we plot the estimated unsafe region with high probability, where $\mathbb{P}(\tilde{\varphi} \leq 0) = 0.95$.}
\label{fig:sdf_result}
\vspace*{-3ex}
\end{figure}
%

The robot is equipped with a LiDAR scanner with a $270^\circ$ field of view, $200$ rays per scan, $3$ meter range, and zero-mean Gaussian measurement noise with standard deviation $\sigma \in \{0.01, 0.02, 0.05 \}$. The LiDAR scans are used to estimate the unsafe regions $\calO_i$ in the environment and construct a CBF constraint for each. We rely on the concept of signed distance function (SDF) (e.g. Fig.~\ref{fig:1b}) to describe each $\calO_i$. The SDF function $\varphi_i: \mathbb{R}^2 \mapsto \mathbb{R}$ of set $\calO_i \subseteq \bbR^2$ is
\begin{equation}
\varphi_i(\bfy) := \begin{cases}
 -d(\bfy,\partial \calO_i), & \bfy \in \calO_i, \\
 \phantom{-} d(\bfy,\partial \calO_i), &  \bfy \notin \calO_i,
 \end{cases}
\end{equation} 
where $d$ denotes the Euclidean distance from a point $\bfy \in \mathbb{R}^2$ and the set boundary $\partial \calO_i$.
%
%
We employ incremental training with replay memory (ITRM)~\cite[Sec. IV]{Long_learningcbf_ral21} to estimate an SDF $\varphi_i$ for each $\calO_i$ from the LiDAR measurements. We use a $4$-layer fully-connected neural network with parameters $\bftheta$ and dropout layers to yield $\tilde{\varphi}_i(\bfy;\bftheta)$ with dropout rate $0.05$ applied to each $512$-neuron hidden layer.
Given $\bfy \in \mathbb{R}^2$, we obtain the predictive SDF mean $\hat{\varphi}_i(\bfy)$ and standard deviation $\hat{\sigma}_i(\bfy)$ by Monte-Carlo estimation with $T = 20$ stochastic forward passes through the dropout neural network model. When the TurtleBot moves along a circle of radius $2$ while the object is placed at the center, we measure the accuracy of the online SDF method using the empirical SDF error,
%
$\mathcal{E}_i = \frac{1}{m}\sum_{j=1}^{m} |\hat{\varphi}_i(\bfy_{j}) |$,  
%
where $\{\bfy_j\}_{j=1}^{m}$ are $m = 500$ points uniformly sampled on the surface of the object. 
In Fig.~\ref{fig:sdf_result}, we show the SDF estimation with measurement noise $\sigma = 0.01$.

Since we deal with system dynamics with relative degree one, one can verify~\cite{Chen2021BackupCB} that the SDF is a valid CBF. Let $\bfz = [x, y] \in \calZ \subset \mathbb{R}^2$ be the position part of $\bfx$. To account for the fact that the robot body is not a point mass, we subtract the robot radius $\rho = 0.177$ from each SDF estimate when defining each mean CBF: $\tilde{h}_i(\bfx) = \tilde{\varphi}_i(\bfz;\bftheta) - \rho$. For variance $K_h(\bfx,\bfx)$ in Sec.~\ref{sec:prob_control}, we set $K_{h}^{i}(\bfx,\bfx) = \hat{\sigma}_i^2(\bfz)$. We also take $\nabla \tilde{h}_i(\bfx) = \nabla \tilde{\varphi}_i(\bfz; \bftheta)$ and compute $\calH_{\bfx, \bfx'}K^i_h(\bfx, \bfx')$ by Monte-Carlo estimation using double back-propagation. We set the worst case error bounds $e_h(\bfx)$, $e_{\nabla h}(\bfx)$ in Sec.~\ref{sec:robust_control} as the $99.99\%$ confidence bounds of a Gaussian random variable with standard deviation $\hat{\sigma}_i(\bfz)$. If the robot observes multiple obstacles in the environment, we compute multiple CBFs $\tilde{h}_i(\bfx)$ and their corresponding uncertainty, and add multiple CBCs to \eqref{eq:QP_origin}, \eqref{eq:gaussian_socp_form}, \eqref{eq:worst_case_SOCP_formulation} for safe control synthesis.


\subsection{Safe Navigation}
\label{sec: safe_navigation}

\begin{table}[t]
\centering
\caption{Success rate of the navigation tasks in $100$ realizations ($10$ realizations for each of the $10$ different environments) using the Probabilistic CLF-CBF-SOCP, Robust CLF-CBF-SOCP, and the original CLF-CBF-QP frameworks for different LiDAR measurement noise levels $\sigma$.}
\resizebox{\linewidth}{!}{
\begin{tabular}{ |l|c|c|c|c|c| }
\hline
\multirow{2}{1cm}{LiDAR Noise $\sigma$} & \multirow{2}{1.4cm}{QP Success Rate} & \multicolumn{3}{c|}{Probabilistic Success Rate} & \multirow{2}{1.5cm}{Robust Success Rate}\\\cline{3-5}
&  & $p = 0.2$ & $p = 0.4$ & $p = 0.8$ &  \\
\hline
$0.01$ & $0.82$ & $0.98$ & $1.0$ & $1.0$ & $1.0$ \\
$0.02$  & $0.65$ & $0.92$ & $0.97$ & $1.0$ & $1.0$\\ 
$0.05$   &  $0.37$ & $0.72$ & $0.89$ & $0.96$ & $1.0$ \\\hline 
\end{tabular}}
\label{table:noise_diff}
\caption{Fr\'echet distance between the reference path and the robot trajectories generated by the Probabilistic CLF-CBF-SOCP, Robust CLF-CBF-SOCP, and the CLF-CBF-QP controllers (smaller values indicate larger trajectory similarity, the value in the parentheses indicates the success rates while values without parentheses indicate the success rate is $1$}, and N/A indicates the robot collides with obstacles in all $10$ realizations).
\resizebox{\linewidth}{!}{
\begin{tabular}{ |l|c|c|c|c|c| }
\hline
\multirow{2}{*}{Env} & \multirow{2}{*}{QP} & \multicolumn{3}{c|}{Probabilistic} & \multirow{2}{*}{Robust} \\\cline{3-5}
&  & $p = 0.2$ & $p = 0.4$ & $p = 0.8$ &  \\
\hline
1 & $0.337$ & $0.338$ & $0.343$ & $0.363$ & $0.357$\\
2  & $0.378$& $0.408$ & $0.404$ & $0.432$ & $0.485$ \\ 
3   & $0.372$ & $0.398$ & $0.412$ & $0.457$ & $0.538$\\
4 & $0.416$&  $0.438$ & $0.427$ & $0.473$ & $0.515$ \\
5  & $0.395$ &  $0.418$ & $0.412$ & $0.483$ & $0.572$\\
6   & $0.385 \: (0.8)$ & $0.371$ & $0.378$ & $0.392$ & $0.424$\\
7 & $0.462 \: (0.5)$ & $0.502$ & $0.546$ & $0.593$ & $0.737$\\
8  &  $0.535 \: (0.2)$ & $0.588$ & $0.612$ & $0.673$ & $0.814$\\
9  &  N/A & $0.756 \: (0.8)$& $0.887 \: (0.9)$ & $0.926$ & $1.016$\\
10  &  N/A &$0.905 \: (0.4)$ & $0.937 \: (0.8)$ & $1.046$ & $1.224$\\
\hline 
\end{tabular}}
\label{table:traj_diff}
\vspace*{-3ex}
\end{table}


\begin{figure}[t]
  \centering
  \subcaptionbox{Noise: $\sigma = 0.01$\label{fig:3a}}{\includegraphics[width=0.4\linewidth]{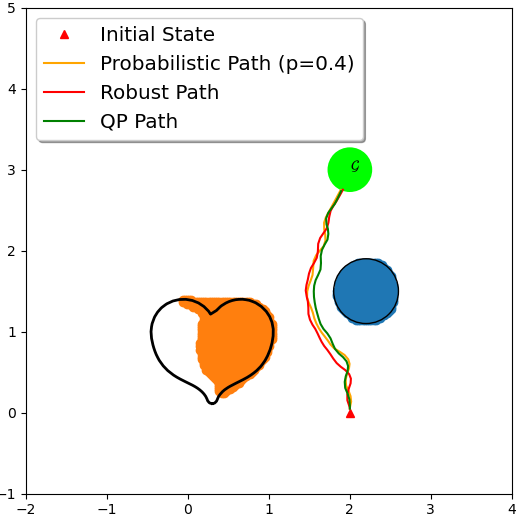}}%
  \subcaptionbox{Noise: $\sigma = 0.02$\label{fig:3b}}{\includegraphics[width=0.4\linewidth]{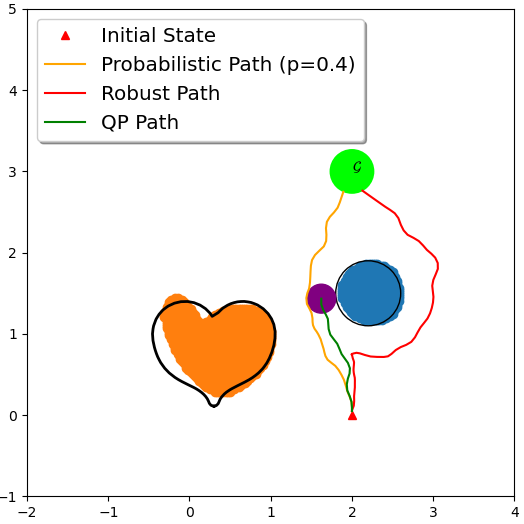}} \\ 
  \centering
  \subcaptionbox{Safe Trajectory Tracking\label{fig:3c}}{\includegraphics[width=0.7\linewidth]{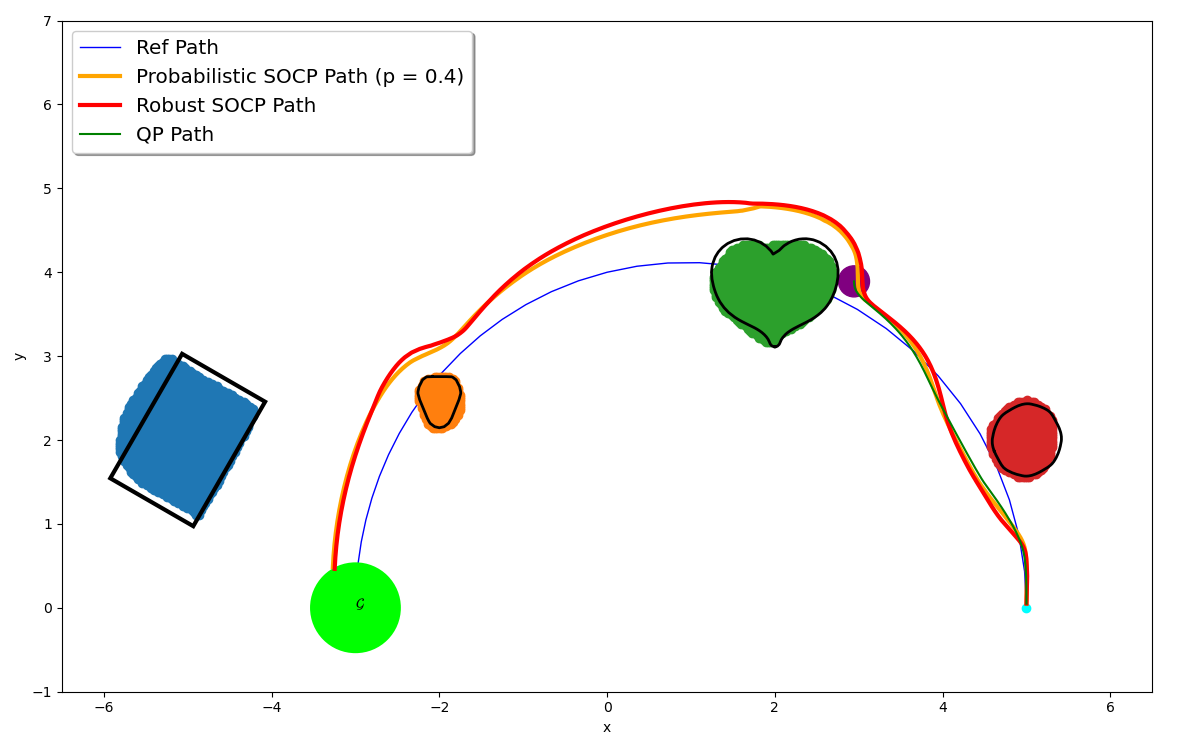}}
  \caption{Performance comparison among the three controllers. Ground-truth obstacle surfaces are shown as black curves. The mean of the estimated obstacles, obtained after the whole path is traversed by the probabilistic CLF-CBF-SOCP controller are shown in different colors (red, green, orange, blue). The trajectories generated by the probabilistic and robust CLF-CBF-SOCP controllers are in red and blue, respectively, while the CLF-CBF-QP trajectory is in green. The starting point is cyan and the goal region is a light-green disk. The robot (purple disk in (b) and (c)), controlled by the CLF-CBF-QP controller, collides with obstacles and does not reach the goal. In (a) and (b), we compare the controller performance under different LiDAR noise level for a same environment. In (a), the results are collected under LiDAR measurement noise $\sigma = 0.01$. In (b), the results are collected under LiDAR noise $\sigma = 0.02$. In (c), the trajectory tracking results of environment $8$ is shown and the reference path is shown in blue.}
  \label{fig:compare_result}
  \vspace*{-3ex}
\end{figure}

Our main experiments demonstrate safe navigation and safe trajectory tracking using the proposed probabilistic~\eqref{eq:gaussian_socp_form} and robust~\eqref{eq:worst_case_SOCP_formulation} CLF-CBF-SOCP formulations, utilizing the dynamics estimates from Sec. VI.A and the online CBF estimates from Sec. VI.B. To emphasize the importance of accounting for estimation errors, we also implement the original CLF-CBF-QP controller~\eqref{eq:QP_origin}, which assumes the estimated barrier functions and system dynamics are accurate (i.e., uses the mean values from the dropout-network estimation as the true values). In all three controllers, we set $L(\bfx) = \diag([0, 10, 3])$ and $\underline{\tilde{\bfk}}(\bfx) = [1, v_{\max}, 0]^\top$ where $v_{\max} = 0.65$ is the maximum linear velocity for the TurtleBot. The remaining parameters were $\lambda = 1000$, $\alpha_V(V(\bfx)) = 2V(\bfx)$, and $\alpha_h(h_i(\bfx)) = h_i(\bfx)$.

In the first set of experiments (Fig.~\ref{fig:3a} and Fig.~\ref{fig:3b}), we demonstrate safe navigation to a goal point with a CLF candidate $V(\bfx) = (x-2)^2 + (y-3)^2$. In Fig.~\ref{fig:3a}, when the LiDAR noise level is low, the robot controlled by all three controllers succeeds to reach the goal region and the SOCP formulations are slightly more conservative than the QP formulation. In Fig.~\ref{fig:3b}, the LiDAR noise level increases to $\sigma = 0.02$ and we can observe major differences among the paths generated by the three controllers. This is because the estimated variance and error bounds of the barrier function and its gradient increase with the increase of the LiDAR noise. The robot controlled by the CLF-CBF-QP controller collides with an obstacle,  while the robot controlled by  probabilistic or robust SOCP controller succeeds in avoiding obstacles. Importantly, the robot controlled by the robust SOCP controller switches to bypass the round obstacle from the right because controller cannot find a feasible path on the left with larger error bounds on estimated barrier functions. 

In the following set of experiments, we consider the problem of safe trajectory tracking using the approach in \cite[Sec. VI]{Long_learningcbf_ral21} to construct a valid CLF $V(\bfx)$ for path following. 
In Table~\ref{table:noise_diff}, we report the success rate of the trajectory tracking task using the proposed formulations and the original QP framework under different measurement noises. As the noise increases, the success rate of the CLF-CBF-QP controller decreases rapidly, while the success rate of the probabilistic framework with high $p$ and the robust framework stays high. 

In Fig.~\ref{fig:3c}, we show one realization in environment $8$ (with $\sigma = 0.01$), where the CLF-CBF-QP controller fails to avoid obstacles because it does not consider the errors in $\textit{CBC}(\bfx,\ubfu)$, while the proposed frameworks guarantee safety. 
%
%
When there is an obstacle near or on the reference path, the robot controlled by the robust SOCP controller stays furthest away, while the probabilistic SOCP controller also guarantees the robot stays further away from the obstacles than the robot controlled by the CLF-CBF-QP controller.

%
%


In Table~\ref{table:traj_diff}, we show quantitative results using the Fr\'echet distance ~\cite[Sec. VI]{Long_learningcbf_ral21} as the metric to measure trajectory similarities. The distance value is computed by averaging the successful realizations in each environment, and the LiDAR noise is set to be $\sigma = 0.02$ in this set of experiments. We see that the robust CLF-CBF-SOCP controller is most conservative as it has the largest Fr\'echet distance values while the probabilistic CLF-CBF-SOCP controller is less conservative if we set the user-specified risk tolerance $p = 0.8$. By lowering the risk tolerance value ($p = 0.2/0.4$), the robot with the probabilistic controller follows the reference path better while facing a higher risk of collision. A qualitative result is shown in Fig.~\ref{fig:1b}, where larger $p$ values indicates higher probability of being safe for the robot. The trajectory generated by the CLF-CBF-QP controller has the smallest Fr\'echet distance values, but fails in several environments. 

Finally, to demonstrate the efficiency of the proposed formulations, we compare the average time needed for solving the QP, probabilistic SOCP, and robust SOCP formulations per control synthesis along the trajectory tracking task. All optimization problems are solved using the Embedded Conic Solver in CVXPY~\cite{cvxpy} with an Intel i7 9700K CPU. The time needed for solving one QP instance is $0.00863s$ while the times needed for solving the proposed probabilistic and robust SOCPs are $0.0109s$ and $0.0122s$. As expected, our SOCP formulations require slightly more time than the original QP but are still suitable for online robot navigation.


\section{Conclusion}

We considered the problem of enforcing safety and stability of unknown robot systems operating in unknown environments. We showed that accounting for either Gaussian or worst-case error bounds in the system dynamics and safety constraints leads to a novel CLF-CBF-SOCP formulation for control synthesis. We validated our formulations in autonomous navigation tasks, simulating a ground robot in several unknown environments. Some drawbacks of our formulations include that large model error bounds may lead to infeasibility of the robust SOCP, and that the assumption that system dynamics and barrier functions are GPs may not be true in practice. Future work will implement the proposed formulations on a real robot, consider object category pre-training of the SDF neural network, and explore adaptive techniques for safe control synthesis given varying uncertainty levels and robot objectives.





%
%

%
%
\bibliographystyle{ieeetr}
\bibliography{ref.bib}


\end{document}